\documentclass[letterpaper,journal]{IEEEtran}
\usepackage{amsmath,amsfonts,amssymb}
\usepackage{amsthm}
\usepackage{algorithmic}
\usepackage{algorithm}
\usepackage{array}
\usepackage[caption=false,font=normalsize,labelfont=sf,textfont=sf]{subfig}
\usepackage{textcomp}
\usepackage{stfloats}
\usepackage{url}
\usepackage{graphicx}
\usepackage{cite}
\usepackage{bm}
\usepackage{booktabs}
\usepackage{mathtools}
\usepackage{placeins}
\usepackage{siunitx}

\theoremstyle{definition}
\newtheorem{definition}{Definition}
\newtheorem{remark}{Remark}
\theoremstyle{plain}
\newtheorem{theorem}{Theorem}
\newtheorem{corollary}{Corollary}
\newtheorem{proposition}{Proposition}

\newcommand{\R}{\mathbb{R}}
\newcommand{\norm}[1]{\left\lVert#1\right\rVert}

\newcommand{\sdf}{\phi}          

\newcommand{\udes}{\mathbf{u}_{\mathrm{des}}}
\newcommand{\usafe}{\mathbf{u}_{\mathrm{safe}}}

\begin{document}

\title{A Closed-Form Dual-Barrier CBF Safety Filter for Holonomic Robots on Incrementally Built Occupancy Grid Maps}

\author{Himanshu Paudel, Basanta Joshi, Dhirendra Raj Madai, Alina Bartaula, Biman Rimal, Sanjay Neupane}




\maketitle

\begin{abstract}
We present a dual-barrier control barrier function (CBF) safety filter for real-time, safety-critical velocity control of holonomic robots operating in incrementally built occupancy-grid maps. As a robot explores an unknown environment, its map grows incrementally and unmapped regions carry irreducible uncertainty: obstacle geometry beyond the explored frontier is entirely unknown, making incursion into unexplored space a source of unquantifiable collision risk especially in cases with only front facing sensor based maps. To address this, we enforce two complementary safety constraints simultaneously: the robot must avoid mapped obstacles and must not venture beyond the explored frontier into regions where no obstacle information exists. Each constraint is derived analytically from the occupancy grid's signed distance field, yielding a closed-form safety filter that requires at most a small linear system solve per control cycle rather than a general-purpose optimizer. On resource-constrained companion computers such as the Raspberry Pi, perception and planning pipelines including visual-inertial SLAM already impose substantial computational load, leaving little headroom for a safety layer; the analytical nature of our filter ensures minimal overhead , leaving the remaining compute budget available for SLAM and planning. An adaptive gain schedule further relaxes the frontier constraint in information-rich regions and tightens it in well-mapped areas, improving exploration efficiency without sacrificing safety guarantees. Because the filter acts purely in velocity space as a minimally invasive correction, it composes transparently with arbitrary nominal controllers, including learning-based navigation frameworks, providing formal safety guarantees without requiring access to or modification of the underlying policy. Hardware flight experiments on a PX4-controlled quadrotor demonstrate zero obstacle contacts across multiple indoor runs, with the filter intervening only when the nominal velocity would violate a safety margin.
\end{abstract}

\begin{IEEEkeywords}
Control barrier functions, UAV exploration, occupancy grid, signed distance
field, safety filter, quadratic programming, adaptive gain, PX4.
\end{IEEEkeywords}

\section{Introduction}
\IEEEPARstart{S}{afety-critical} velocity control of holonomic robots
operating in incrementally built maps requires simultaneously avoiding
mapped obstacles and respecting the boundary of explored space, where
obstacle information is absent.
These two objectives are in direct tension: aggressive navigation toward
unexplored regions naturally drives the robot toward unmapped space, where
the risk of collision is highest.

Classical reactive controllers, such as artificial potential fields
(APF)~\cite{khatib1986apf}, resolve this tension heuristically: repulsive
potentials slow the robot near obstacles while an attractive potential pulls
it toward the navigation goal.
However, APF controllers provide no formal safety certificate and are
susceptible to local minima.
Model predictive control (MPC) can incorporate safety constraints explicitly,
but its computational cost is prohibitive for onboard execution at high rates
on resource-constrained companion computers.
The safety filter presented here is demonstrated on a UAV platform with an
RRT*-APF nominal controller, but applies directly to any velocity-controlled
holonomic robot; a particularly attractive use case is safety-wrapping
learning-based navigation frameworks.

\emph{Control barrier functions} (CBFs)~\cite{ames2019cbf} offer a principled
middle ground.
A CBF $h:\mathcal{X}\rightarrow\mathbb{R}$ certifies forward invariance of
the safe set $\mathcal{C}=\{x\mid h(x)\geq 0\}$ by enforcing
$\dot{h}(x,u) \geq -\gamma h(x)$, which reduces to a linear constraint on the
control input $u$ and can be resolved as a QP at each timestep.
Recent work has extended CBFs to occupancy-grid maps by constructing barrier
functions from signed distance fields (SDFs)~\cite{ogmcbf2024}, enabling
real-time safety filtering on maps produced by onboard SLAM.

\textbf{Contribution.}
We extend the OGM-CBF framework of~\cite{ogmcbf2024} in three directions:
\begin{enumerate}
  \item \textbf{Dual barrier with admissible shaping class:} we generalise
        the single-barrier OGM-CBF to a \emph{dual} formulation by defining
        two simultaneous barrier functions---one penalising proximity to
        obstacles ($h_1$) and one penalising excursion into unexplored space
        ($h_2$)---each constructed from an SDF composed with a shaping
        function from an admissible class $\mathcal{T}$ (monotone,
        Lipschitz-continuous, sign-preserving, bounded).
        The class-level analysis subsumes the tanh construction as one
        concrete instantiation and clarifies which properties are essential
        for the CBF conditions to hold.
  \item \textbf{Closed-form dual projection:} instead of a general-purpose QP
        solver, we enumerate the four KKT active-set cases analytically,
        reducing the per-cycle computation to at most a $2\times2$ linear
        system.
        A soft-QP bisection fallback handles the singular case when both
        SDF gradient directions coincide, finding the least-violating safe
        velocity.
  \item \textbf{Adaptive $\gamma_2$:} the frontier barrier gain is scheduled
        online using a local uncertainty density estimate derived from the
        occupancy grid.
        This relaxes the frontier constraint in information-rich regions and
        tightens it once the neighbourhood is fully mapped, improving
        navigation efficiency without sacrificing safety.
\end{enumerate}
We validate the complete system with hardware experiments on a
CUAV Nano 7, Raspberry~Pi~4B quadrotor running PX4~v1.16 and ROS~2~Jazzy,
reporting CBF parameters across multiple runs.
The remainder of this paper is structured as follows.
Section~\ref{sec:related} reviews related work.
Section~\ref{sec:problem} states the problem formulation.
Section~\ref{sec:method} presents the CBF filter in detail.
Section~\ref{sec:system} describes the system integration and hardware platform.
Section~\ref{sec:experiments} describes the hardware setup and experimental
results.
Section~\ref{sec:conclusion} concludes.

\section{Related Work}
\label{sec:related}

\subsection{Control Barrier Functions for Robotics}
CBFs were formalised by Ames~\textit{et al.}~\cite{ames2019cbf} and have since
been applied to wheeled robots~\cite{wang2017cbf_multi}, manipulators, and
aerial vehicles~\cite{Wu}.
For a control-affine system $\dot{x}=f(x)+g(x)u$, a control barrier function
$h:\mathcal{X}\rightarrow\mathbb{R}$ defines the safe set
$\mathcal{C}=\{x\in\mathcal{X}\mid h(x)\geq 0\}$.
Forward invariance is enforced by requiring the existence of a class-$\mathcal{K}$
function $\alpha$ such that
\begin{equation}
L_f h(x)+L_g h(x)u \geq -\alpha\bigl(h(x)\bigr),
\end{equation}
which, for the common linear choice $\alpha(h)=\gamma h$ with $\gamma>0$,
reduces to the affine inequality $L_f h(x)+L_g h(x)u \geq -\gamma h(x)$.
The standard formulation appends a CBF constraint to a CLF-based QP
(CLF-CBF-QP), yielding a safety filter that minimally modifies a nominal
controller.
Extensions include high-relative-degree systems~\cite{nguyen2016exponential},
stochastic settings~\cite{clark2021control}, and learning-based
approaches~\cite{srinivasan2020synthesis}.

\subsection{Safety Filtering for UAVs}
Safety-critical control for quadrotors has been studied using
CBFs~\cite{Wu} and Hamilton--Jacobi reachability.
Most existing approaches assume a known, static environment represented by a
geometric model (sphere trees, polytopes), which limits applicability to online
SLAM settings where the map evolves continuously.

\subsection{Occupancy-Grid-Based CBFs}
The closest prior work to ours is the OGM-CBF framework~\cite{ogmcbf2024},
which constructs a single barrier function from the signed distance field of an
occupancy grid and derives the corresponding CBF condition using the Eikonal
property $\norm{\nabla\sdf}=1$.
Our work extends this to a \emph{dual} barrier that additionally constrains
the frontier SDF, introduces a closed-form solver for the resulting
two-constraint QP, and demonstrates the approach on real UAV hardware.

\subsection{Frontier-Based Exploration}
Frontier exploration~\cite{yamauchi1997frontier} identifies boundaries between
known-free and unknown space as candidate goals.
Scoring functions based on information gain~\cite{bircher2016receding},
travel cost, and approach-corridor quality have been proposed to select among
candidate frontiers.

\section{Problem Formulation}
\label{sec:problem}

\subsection{System Model}
Consider a quadrotor operating in a 2D horizontal plane at fixed altitude,
with state $\mathbf{x} = [p_x, p_y]^\top \in \R^2$ and velocity control input
$\mathbf{u} = [v_x, v_y]^\top \in \R^2$.
The kinematic model is:
\begin{equation}
  \dot{\mathbf{x}} = \mathbf{u}.
  \label{eq:kinematics}
\end{equation}
A nominal controller (here, an artificial potential field) produces a desired
velocity $\udes \in \R^2$.
Our goal is to compute a minimally modified safe velocity $\usafe$ such that
two safety constraints are satisfied simultaneously.

\subsection{Occupancy Grid and Signed Distance Fields}
Let $\mathcal{M} \subset \mathbb{Z}^2$ denote an occupancy grid map with
resolution $r$ [m/cell], where each cell takes values in
$\{-1, 0, 100\}$ representing unknown, free, and occupied, respectively.
We define two signed distance fields over $\mathcal{M}$:

\noindent\textbf{Obstacle SDF} $\sdf_{\mathrm{obs}}$: the signed Euclidean
distance to the nearest obstacle cell,
\begin{equation}
  \sdf_{\mathrm{obs}}(\mathbf{x}) > 0 \;\text{ in free space,} \quad
  \sdf_{\mathrm{obs}}(\mathbf{x}) < 0 \;\text{ inside obstacles.}
\end{equation}
Both SDFs satisfy the Eikonal equation $\norm{\nabla\sdf}=1$ almost everywhere
(by Rademacher's theorem applied to the distance transform), which yields
unit-norm gradients suitable for CBF constraint construction.~\cite{Luo2019}

\noindent\textbf{Frontier SDF} $\sdf_{\mathrm{unk}}$: the signed Euclidean
distance to the nearest significant unknown-cell cluster.
Unknown clusters with fewer than $N_{\min}$ cells are discarded to suppress
SLAM noise.
When no significant frontier exists (map fully explored),
$\sdf_{\mathrm{unk}}$ is undefined and the frontier constraint is dropped.

\subsection{Safety Constraints}
We require the vehicle to remain in the \emph{dual-safe set}:
\begin{align}
  \mathcal{C}_1 &= \left\{\mathbf{x} \;\middle|\; \sdf_{\mathrm{obs}}(\mathbf{x}) \geq d_{\mathrm{safe}}\right\}, \\
  \mathcal{C}_2 &= \left\{\mathbf{x} \;\middle|\; \sdf_{\mathrm{unk}}(\mathbf{x}) \geq d_{\mathrm{stop}}\right\},
\end{align}
where $d_{\mathrm{safe}}$ and $d_{\mathrm{stop}}$ are user-defined standoff
distances.
$\mathcal{C}_1$ keeps the drone away from obstacles;
$\mathcal{C}_2$ prevents it from entering unmapped territory where no obstacle
information is available. Fig.~\ref{fig:dstop} illustrates the frontier
standoff distance and obstacle standoff distance, each defined as the sum of the maximum robot radius and a desired clearance from frontier and obstacles respectively.

\begin{figure}
\centering
\vspace{-1mm}
\includegraphics[width=0.68\columnwidth]{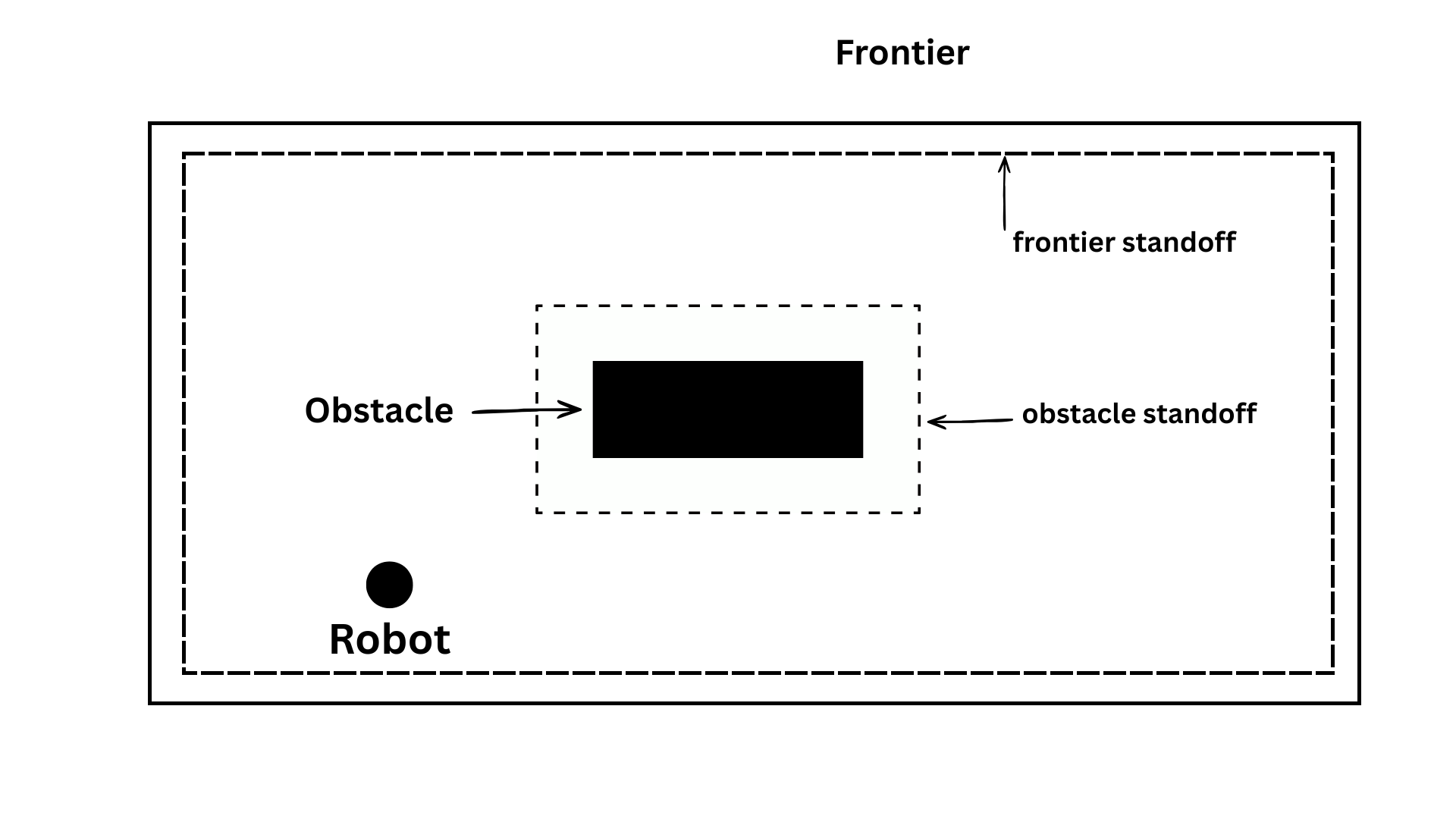}
\caption{Illustration of the frontier and obstacle stand-off distances used in the safety constraints.}
\label{fig:dstop}
\vspace{-2mm}
\end{figure}

\section{Dual-OGM-CBF Safety Filter}
\label{sec:method}

\subsection{Admissible Shaping Function Class}
\label{sec:class}

Following Raja~\textit{et al.}~\cite{ogmcbf2024}, we do not use the raw SDF
as a barrier function directly.
Instead, we compose it with a \emph{shaping function} drawn from the following
admissible class.

\begin{definition}[Admissible shaping class $\mathcal{T}$]
\label{def:class}
A function $T:\R\to\R$ belongs to $\mathcal{T}$ if it satisfies:
\begin{enumerate}
  \item[\textbf{(T1)}] $T(0)=0$ and $T(s) \geq 0 \iff s \geq 0$.
  \item[\textbf{(T2)}] $T$ is strictly increasing: $s_1 < s_2 \Rightarrow T(s_1)<T(s_2)$.
  \item[\textbf{(T3)}] $T$ is continuously differentiable with $|T'(s)|\leq L$
        for some finite $L = L(T) > 0$.
  \item[\textbf{(T4)}] $T$ is bounded: $\sup_{s}|T(s)| < \infty$.
\end{enumerate}
\end{definition}

Property (T1) ensures $T(0)=0$ and sign-preservation, so the zero-superlevel
set of $T(\sdf(\cdot)-d)$ coincides with $\{\sdf \geq d\}$. Shifted sigmoids satisfying $T(0)=0$ are admissible.
Property (T3) guarantees that the Eikonal property $\norm{\nabla\sdf}=1$ (valid
a.e.\ by Rademacher's theorem) carries through to give a bounded gradient norm
for the composed barrier.
Property (T4) ensures $h_i$ is bounded, preventing $b_i = -\gamma_i h_i$ from
becoming arbitrarily negative far from the surface. Without T4, the
constraint $\mathbf{g}_i^\top \mathbf{u} \geq b_i$ would be trivially
satisfiable for any $\mathbf{u}$ at large $\varphi_i$, providing no meaningful
safety guarantee in the interior of $\mathcal{C}_i$.

\begin{remark}
Any $T\in\mathcal{T}$ admits a first-order CBF construction under
Definition~\ref{def:class}.
In particular, $\mathcal{T}$ includes:
(i)~$T(s) = \tanh(as)$, $a>0$ (our instantiation);
(ii)~$T(s) = s/(1+|s|)$;
(iii)~$T(s) = \mathrm{erf}(as)$, $a>0$;
(iv)~any odd sigmoid with bounded derivative.
The analysis in Sections~\ref{sec:barriers}--\ref{sec:soft} holds for all
$T\in\mathcal{T}$; the specific closed forms in
\eqref{eq:hdot_concrete}--\eqref{eq:rhs_concrete} specialise to $T=\tanh$.
\end{remark}

\subsection{Dual Barrier Function Construction}
\label{sec:barriers}

Given $T\in\mathcal{T}$ and a shift parameter $a_i>0$, define the
\emph{scaled shaping function} $T_{a_i}(s)\coloneqq T(a_i s)$. Note that
$T_{a_i}$ satisfies T1, T2, T4 by construction. It satisfies T3 with
inherited constant $L_i = a_i L(T)$, since by the chain rule
$|T'_{a_i}(s)| = a_i|T'(a_i s)| \leq a_i L(T)$. Hence $T_{a_i} \in \mathcal{T}$.
The constraint normal $\mathbf{g}_i$ therefore has norm
$\|\mathbf{g}_i\| = T'_{a_i}(\varphi_i - d_i)\|\nabla\varphi_i\| \leq a_i L(T)$,
which is finite and bounded away from the saturated regime.
The $i$-th barrier function is:
\begin{equation}
  h_i(\mathbf{x}) = T_{a_i}\!\bigl(\sdf_i(\mathbf{x}) - d_i\bigr),
  \quad i \in \{1,2\},
  \label{eq:barrier_general}
\end{equation}
where $(\sdf_1, d_1) = (\sdf_{\mathrm{obs}}, d_{\mathrm{safe}})$ and
$(\sdf_2, d_2) = (\sdf_{\mathrm{unk}}, d_{\mathrm{stop}})$.

\begin{proposition}
\label{prop:safe_set}
For any $T\in\mathcal{T}$, the zero-superlevel set of $h_i$ satisfies
$\{h_i \geq 0\} = \{\sdf_i \geq d_i\} = \mathcal{C}_i$.
\end{proposition}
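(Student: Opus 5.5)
The plan is to reduce the claim to a pointwise statement about the scalar argument $s = \sdf_i(\mathbf{x}) - d_i$, and then apply property (T1) to the scaled shaping function $T_{a_i}$. Fix $i\in\{1,2\}$ and $T\in\mathcal{T}$, and take any $\mathbf{x}$ at which $\sdf_i$ is defined. For $i=2$ this means we restrict to the case where a significant frontier exists. Otherwise the frontier constraint is dropped and there is nothing to prove.

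\textbf{Step 1: $T_{a_i}$ is sign-preserving.} We have $T_{a_i}(0)=T(0)=0$. Because $a_i>0$, multiplication by $a_i$ preserves sign, so $s\geq 0 \iff a_i s\geq 0$. Applying (T1) to $T$ at the point $a_i s$ gives
\[
T_{a_i}(s) = T(a_i s) \geq 0 \iff a_i s \geq 0 \iff s \geq 0 .
\]
The paper has already noted that $T_{a_i}$ inherits (T1), so this step can be quoted directly. The explicit chain above is the only real content.

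\textbf{Step 2: substitute the SDF.} Set $s=\sdf_i(\mathbf{x})-d_i$ in the equivalence from Step 1. This gives $h_i(\mathbf{x})\geq 0 \iff \sdf_i(\mathbf{x})\geq d_i$ for every admissible $\mathbf{x}$. Hence $\{h_i\geq 0\}=\{\sdf_i\geq d_i\}$ as sets, since each inclusion follows from one direction of the equivalence.

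\textbf{Step 3: identify the set with $\mathcal{C}_i$.} We have $\{\sdf_i\geq d_i\}=\mathcal{C}_i$ by the definitions of $\mathcal{C}_1$ and $\mathcal{C}_2$ together with the identifications $(\sdf_1,d_1)=(\sdf_{\mathrm{obs}},d_{\mathrm{safe}})$ and $(\sdf_2,d_2)=(\sdf_{\mathrm{unk}},d_{\mathrm{stop}})$.

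The argument needs no regularity of $\sdf_i$. In particular, the Eikonal property, (T3) and (T4) play no role.

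\textbf{Main obstacle.} There is essentially no analytical difficulty. The one point needing care is the domain of $\sdf_{\mathrm{unk}}$, which is undefined when no significant frontier exists, so I would state explicitly that the equality holds on the domain where $h_i$ is defined.

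A secondary point is that the zero level set itself is preserved, $\{h_i=0\}=\{\sdf_i=d_i\}$. This follows from strict monotonicity (T2) together with $T(0)=0$, and I would mention it as a remark because it identifies the boundary of $\mathcal{C}_i$ used later.
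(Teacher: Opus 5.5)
Your proposal is correct and follows the same route as the paper: apply the sign-preservation property (T1) to the shaping function and substitute $s=\sdf_i(\mathbf{x})-d_i$. Your explicit use of $a_i>0$ to pass from $T$ to $T_{a_i}$, and your observation that (T2) is not needed for the superlevel-set equality, fill in small details that the paper's two-line proof leaves implicit.
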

\begin{proof}
Since $T$ satisfies sign-preservation and strict monotonicity (T1, T2), $T(s)\geq 0 \iff s\geq 0$.
Therefore $h_i(\mathbf{x}) = T_{a_i}(\sdf_i - d_i) \geq 0 \iff \sdf_i(\mathbf{x})\geq d_i$.
\end{proof}

\begin{proposition}[Forward invariance of $\mathcal{C}_1 \cap \mathcal{C}_2$]
\label{prop:forward_invariance}
Let $h_1, h_2$ be constructed as in~\eqref{eq:barrier_general} for any
$T \in \mathcal{T}$. If $\usafe$ is the solution to~\eqref{eq:qp}, then the
dual-safe set $\mathcal{C}_1 \cap \mathcal{C}_2$ is forward invariant under the
kinematic model~\eqref{eq:kinematics}.
\end{proposition}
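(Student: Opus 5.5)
The plan is the standard comparison-lemma argument applied to each barrier separately. Forward invariance of an intersection follows once each $h_i$ is shown to stay nonnegative along the closed-loop trajectory. I take as given that \eqref{eq:qp} imposes, for $i\in\{1,2\}$, the affine constraints $\mathbf{g}_i^\top\mathbf{u}\geq -\gamma_i h_i(\mathbf{x})$, where $\mathbf{g}_i=\nabla h_i = T'_{a_i}(\sdf_i-d_i)\nabla\sdf_i$, and that the problem is feasible with $\usafe$ as its minimiser.

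\textbf{Step 1: derivative along the flow.} Under \eqref{eq:kinematics} we have $\dot{\mathbf{x}}=\usafe$. Wherever $\sdf_i$ is differentiable, the chain rule together with (T3) gives
\[
\dot h_i = T'_{a_i}(\sdf_i-d_i)\,\nabla\sdf_i^\top\usafe = \mathbf{g}_i^\top\usafe \geq -\gamma_i h_i .
\]
Since $T$ is $C^1$ with $|T'|\le a_iL(T)$ and $\sdf_i$ is $1$-Lipschitz (Eikonal property), the map $h_i$ is Lipschitz. It is therefore absolutely continuous along any absolutely continuous trajectory, and the inequality holds for almost every $t$. One point needs care here: a trajectory could in principle spend positive time on the measure-zero nondifferentiability set of $\sdf_i$. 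I would handle this by reading the constraint with a Clarke generalised gradient, or by invoking the standard assumption that the closed-loop solution is Carathéodory, so that the a.e.\ statement suffices.

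\textbf{Step 2: comparison lemma.} Set $y_i(t)=h_i(\mathbf{x}(t))$. Then $\frac{d}{dt}\bigl(e^{\gamma_i t}y_i\bigr)=e^{\gamma_i t}(\dot y_i+\gamma_i y_i)\ge 0$ a.e., and $e^{\gamma_i t}y_i$ is absolutely continuous, so it is nondecreasing. Hence $y_i(t)\ge e^{-\gamma_i t}y_i(0)\ge 0$ whenever $\mathbf{x}(0)\in\mathcal{C}_i$. By Proposition~\ref{prop:safe_set}, $h_i\ge0$ is equivalent to $\mathbf{x}\in\mathcal{C}_i$. Because both constraints hold simultaneously at the solution of \eqref{eq:qp}, the conclusion holds for $i=1,2$ together, giving $\mathbf{x}(t)\in\mathcal{C}_1\cap\mathcal{C}_2$ for all $t\ge0$.

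\textbf{Main obstacle.} I expect the main obstacle to be the implicit hypotheses rather than the computation. First, feasibility of \eqref{eq:qp} must hold at every state. When $\nabla\sdf_1$ and $\nabla\sdf_2$ are antiparallel, both constraints may fail to hold together, and the soft-QP fallback does not give a hard guarantee. Second, existence of solutions requires $\usafe(\mathbf{x})$ to be sufficiently regular, e.g.\ locally Lipschitz or at least measurable and bounded. I would state both explicitly as standing assumptions: the constraint set is nonempty on $\mathcal{C}_1\cap\mathcal{C}_2$, and the closed-loop system admits solutions. The proof then restricts to that regime, noting that the dropped-frontier case (no significant unknown cluster) reduces to the single-barrier argument for $h_1$ alone.
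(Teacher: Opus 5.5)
Your argument is correct and takes the same route as the paper: because $\usafe$ lies in both halfspaces, $\dot h_i\ge-\gamma_i h_i$ holds for $i=1,2$, so each $\mathcal{C}_i$ is forward invariant and hence so is their intersection; your comparison-lemma step simply proves inline the standard CBF invariance theorem that the paper cites. One of your standing assumptions is in fact automatic: on $\mathcal{C}_1\cap\mathcal{C}_2$ we have $h_i\ge 0$, hence $b_i=-\gamma_i h_i\le 0$, so $\mathbf{u}=\mathbf{0}$ satisfies both constraints and \eqref{eq:qp} is always feasible there, which means antiparallel gradients can cause infeasibility only outside the safe set.
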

\begin{proof}
It suffices to show that $\usafe$ satisfies both halfspace constraints
simultaneously, i.e., $\mathbf{g}_i^\top \usafe \geq b_i$ for $i=1,2$, which
implies $\dot{h}_i \geq -\gamma_i h_i$ for each $i$, and hence $h_i(t) \geq 0$
for all $t \geq 0$ whenever $h_i(0) \geq 0$. The QP~\eqref{eq:qp} is a
projection onto $H_1 \cap H_2$, where $H_i = \{\mathbf{u} : \mathbf{g}_i^\top
\mathbf{u} \geq b_i\}$. Each $H_i$ is a closed halfspace, so $H_1 \cap H_2$ is a
closed convex set. The KKT enumeration of Section~\ref{sec:dual_projection}
finds the unique projection of $\udes$ onto $H_1 \cap H_2$ whenever
$\det(G) \neq 0$, where $G$ is the Gram matrix defined in Section~\ref{sec:dual_projection}. When
$\det(G) = 0$ ($\mathbf{g}_1 \parallel \mathbf{g}_2$) and $H_1 \cap H_2 = \emptyset$,
the soft-QP fallback of Section~\ref{sec:soft} finds the least-violating
$\usafe$ with minimum slack $\delta^\star$; in this case both barriers decay at a
bounded rate governed by $\delta^\star$. In all non-degenerate cases,
$\usafe \in H_1 \cap H_2$ by construction, giving
$\dot{h}_i(\mathbf{x}, \usafe) \geq -\gamma_i h_i(\mathbf{x})$ for $i=1,2$. By
the standard CBF invariance theorem~\cite{ames2019cbf}, each $\mathcal{C}_i$ is
forward invariant, and therefore so is $\mathcal{C}_1 \cap \mathcal{C}_2$.
\end{proof}

\begin{theorem}
For any $T\in\mathcal{T}$, the function
$h_i(\mathbf{x}) = T_{a_i}(\varphi_i(\mathbf{x}) - d_i)$ is a valid CBF for
$\mathcal{C}_i$ with decay rate $\gamma_i$, yielding the halfspace
constraint~\eqref{eq:cbf_constraint_general}.
\end{theorem}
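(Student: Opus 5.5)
The plan is to verify the standard first-order CBF definition for $h_i$ under the single-integrator model \eqref{eq:kinematics}: we must show that for every $\mathbf{x}$ at which $\sdf_i$ is differentiable (almost everywhere, by Rademacher's theorem and the Eikonal property), there exists $\mathbf{u}\in\R^2$ with $\dot h_i(\mathbf{x},\mathbf{u}) \geq -\gamma_i h_i(\mathbf{x})$. Together with Proposition~\ref{prop:safe_set}, which identifies $\{h_i\geq 0\}$ with $\mathcal{C}_i$, this gives validity for $\mathcal{C}_i$. Forward invariance of $\mathcal{C}_i$ then follows from the standard CBF invariance theorem~\cite{ames2019cbf}, exactly as invoked in Proposition~\ref{prop:forward_invariance}.

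\textbf{Step 1 (gradient computation).} Since $f\equiv 0$ and $g\equiv I$, the chain rule with (T3) gives
\[
\dot h_i = T'_{a_i}\bigl(\sdf_i(\mathbf{x})-d_i\bigr)\,\nabla\sdf_i(\mathbf{x})^\top \mathbf{u} \eqqcolon \mathbf{g}_i^\top\mathbf{u},
\]
where $\mathbf{g}_i = a_i T'(a_i(\sdf_i-d_i))\nabla\sdf_i$. Setting $b_i = -\gamma_i h_i(\mathbf{x})$, the CBF condition becomes exactly the halfspace $\mathbf{g}_i^\top\mathbf{u}\geq b_i$, which is \eqref{eq:cbf_constraint_general}. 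Its affinity in $\mathbf{u}$ is immediate.

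\textbf{Step 2 (feasibility).} By (T2) and continuous differentiability, $T'\geq 0$. Wherever $T'(a_i(\sdf_i-d_i))>0$, the Eikonal property $\norm{\nabla\sdf_i}=1$ gives $\mathbf{g}_i\neq 0$. Then $\mathbf{u}=\lambda\nabla\sdf_i$ with $\lambda$ large makes $\mathbf{g}_i^\top\mathbf{u}$ arbitrarily large, so $\sup_{\mathbf{u}}\mathbf{g}_i^\top\mathbf{u}=+\infty > b_i$. The constant $L_i=a_iL(T)$ bounds $\norm{\mathbf{g}_i}$, which keeps the constraint well-scaled but is not needed for existence.

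\textbf{Step 3 (the obstacle: points where $T'=0$).} Strict monotonicity does not exclude isolated zeros of $T'$, and at such a point $\mathbf{g}_i=0$. The condition then reduces to $0\geq-\gamma_i h_i$, i.e.\ $h_i\geq 0$. This holds on $\mathcal{C}_i$, which is all validity requires, but it fails outside $\mathcal{C}_i$. I would handle this in two ways. First, restrict the validity claim to $\mathcal{C}_i$, or to a neighbourhood where $h_i\geq 0$. Second, note that for the concrete instantiation $T=\tanh$ and the other listed examples $T'>0$ everywhere, so the issue never arises for them.

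A secondary technical point is the measure-zero set where $\sdf_i$ is nondifferentiable (the medial axis). I would either assume the trajectory meets it only on a null time set, using the fact that $h_i$ is Lipschitz along trajectories, or use a Clarke-generalized gradient. I would note this as a standing assumption rather than prove it.
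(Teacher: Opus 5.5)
Your proof is correct and follows the same route as the paper's. Both differentiate $h_i$ along $\dot{\mathbf{x}}=\mathbf{u}$ by the chain rule, use positivity of $T'$ and the Eikonal property to get a nondegenerate normal $\mathbf{g}_i=T'_{a_i}(\sdf_i-d_i)\nabla\sdf_i$ (so the inequality is feasible with unbounded inputs), and rearrange $\dot h_i\geq-\gamma_i h_i$ into \eqref{eq:cbf_constraint_general}.

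Your Step~3 is more careful than the paper. The paper asserts $T'_{a_i}>0$ directly from (T2), but strict monotonicity only gives $T'\geq 0$; for example, $T(s)=\tanh(s^3)\in\mathcal{T}$ has $T'(0)=0$. Restricting the claim to $\mathcal{C}_i$, or to instantiations such as $\tanh$ with $T'>0$ everywhere, therefore fixes a genuine slip in the published argument.
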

\begin{proof}
$\dot{h}_i = T'_{a_i}(\varphi_i - d_i)\nabla\varphi_i^\top \mathbf{u}$. Since
$T'_{a_i} > 0$ (T2) and $\|\nabla\varphi_i\| = 1$ a.e.\ (Eikonal),
$\dot{h}_i$ is linear and nondegenerate in $\mathbf{u}$. Setting
$\dot{h}_i \geq -\gamma_i h_i$ gives
constraint~\eqref{eq:cbf_constraint_general}. \qedhere
\end{proof}

\begin{corollary}
For $T = \tanh$, constraint~\eqref{eq:cbf_constraint_general} specialises to
\eqref{eq:hdot_concrete}--\eqref{eq:rhs_concrete}.
\end{corollary}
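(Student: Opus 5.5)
The corollary is a direct specialisation, so the plan is to substitute $T=\tanh$ into the general halfspace constraint~\eqref{eq:cbf_constraint_general} and simplify. The derivation in the preceding theorem gives $\dot h_i = T'_{a_i}(\sdf_i-d_i)\,\nabla\sdf_i^\top\mathbf{u}$. Under the kinematics~\eqref{eq:kinematics} the drift term vanishes, so $L_f h_i = 0$ and $L_g h_i = T'_{a_i}(\sdf_i - d_i)\nabla\sdf_i^\top$.

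\textbf{Step 1: compute the shaped derivative.} With $T_{a_i}(s)=\tanh(a_i s)$, the chain rule gives $T'_{a_i}(s)=a_i\bigl(1-\tanh^2(a_i s)\bigr)=a_i\,\mathrm{sech}^2(a_i s)$. This yields the concrete derivative
\begin{equation*}
\dot h_i = a_i\bigl(1-\tanh^2(a_i(\sdf_i-d_i))\bigr)\nabla\sdf_i^\top\mathbf{u},
\end{equation*}
which should match~\eqref{eq:hdot_concrete}.

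\textbf{Step 2: read off the constraint data.} The constraint normal is $\mathbf{g}_i = a_i\,\mathrm{sech}^2(a_i(\sdf_i-d_i))\nabla\sdf_i$. The right-hand side is $b_i=-\gamma_i\tanh(a_i(\sdf_i-d_i))$, which is the content of~\eqref{eq:rhs_concrete}. Rearranging $\dot h_i\ge-\gamma_i h_i$ then gives exactly $\mathbf{g}_i^\top\mathbf{u}\ge b_i$.

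\textbf{Step 3: confirm admissibility and consistency.} I will check that $\tanh(a_i\cdot)\in\mathcal{T}$ with $L(T)=1$, so the bound $\|\mathbf{g}_i\|\le a_i$ stated before~\eqref{eq:barrier_general} holds. I will also use the Eikonal property $\|\nabla\sdf_i\|=1$ a.e.\ to confirm $\|\mathbf{g}_i\|=a_i\,\mathrm{sech}^2(\cdot)>0$, so the constraint is nondegenerate, as in the theorem.

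The only real obstacle is that the displayed equations~\eqref{eq:hdot_concrete}--\eqref{eq:rhs_concrete} appear later in the paper. Their exact form may differ cosmetically, for instance by writing $\mathrm{sech}^2$ versus $1-\tanh^2$, by dividing through by $a_i\,\mathrm{sech}^2$, or by using a sign convention for the normal. I would reconcile these via the identity $\mathrm{sech}^2=1-\tanh^2$ and positive rescaling, which preserves the halfspace.
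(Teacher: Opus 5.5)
Your proposal is correct and follows the paper's own route. The paper substitutes $T_{a_i}(s)=\tanh(a_i s)$ into \eqref{eq:cbf_constraint_general}, applies the chain rule with $\tanh'=1-\tanh^2=\mathrm{sech}^2$, and reads off $\mathbf{g}_i$ and $b_i$. The one step you should state explicitly is that $\tanh(a_i(\sdf_i-d_i))=h_i$ by \eqref{eq:barrier_tanh}; this turns your $a_i\bigl(1-\tanh^2(\cdot)\bigr)$ directly into the paper's $a_i(1-h_i^2)$, so the rescaling and sign reconciliation you anticipated is not needed.
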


\textbf{Concrete instantiation.}
We choose $T = \tanh$, giving:
\begin{equation}
  h_i(\mathbf{x}) = \tanh\!\bigl(a_i(\sdf_i(\mathbf{x}) - d_i)\bigr)
                 \in (-1,\,1).
  \label{eq:barrier_tanh}
\end{equation}
This choice is convenient because $\tanh$ has an explicit derivative
$\tanh'(s) = 1 - \tanh^2(s) = \mathrm{sech}^2(s)$, yielding compact
closed-form expressions below, and is numerically stable for large $|\sdf_i|$.

\subsection{CBF Conditions and Linear Velocity Constraints}
\label{sec:constraints}

For a general $T\in\mathcal{T}$, the time derivative of $h_i$ along the
kinematic model~\eqref{eq:kinematics} is:
\begin{equation}
  \dot{h}_i = T_{a_i}'\!\bigl(\sdf_i - d_i\bigr)\,\nabla\sdf_i^\top \mathbf{u}.
  \label{eq:hdot_general}
\end{equation}
Since $T_{a_i}' > 0$ (strict monotonicity, T2) and $\|\nabla\varphi_i\| = 1$
a.e.\ (Eikonal), $\dot{h}_i$ is linear in $\mathbf{u}$ with a well-defined,
bounded coefficient. On the discretised occupancy grid, $\nabla\varphi_i$ is
approximated by central finite differences and is well-defined at every cell;
the CBF condition $\dot{h}_i \geq -\gamma_i h_i$ therefore holds pointwise
without qualification.
The CBF condition $\dot{h}_i \geq -\gamma_i h_i$ becomes the halfspace
constraint:
\begin{equation}
  \underbrace{T_{a_i}'\!\bigl(\sdf_i-d_i\bigr)\,\nabla\sdf_i^\top}_{\mathbf{g}_i^\top}
  \mathbf{u}
  \;\geq\;
  \underbrace{-\gamma_i\,T_{a_i}(\sdf_i-d_i)}_{\eqqcolon\, b_i}.
  \label{eq:cbf_constraint_general}
\end{equation}

\textbf{Specialisation to $T=\tanh$.}
Using $\tanh'(s)=\mathrm{sech}^2(s) = 1-\tanh^2(s)$:
\begin{align}
  \dot{h}_i &= a_i\bigl(1-h_i^2\bigr)\,\nabla\sdf_i^\top\mathbf{u},
  \label{eq:hdot_concrete} \\
  b_i       &= -\gamma_i h_i
             = -\gamma_i\tanh\!\bigl(a_i(\sdf_i-d_i)\bigr).
  \label{eq:rhs_concrete}
\end{align}
Because $\norm{\nabla\sdf_i}=1$ a.e., the constraint normal
$\mathbf{g}_i = a_i(1-h_i^2)\nabla\sdf_i$ has norm $a_i(1-h_i^2)\leq a_i$,
so the halfspace is always well-conditioned away from the saturated regime
$h_i\to\pm 1$.

\subsection{Closed-Form Dual Projection}
\label{sec:dual_projection}

Given the linear constraints~\eqref{eq:cbf_constraint_general}, the safety
filter solves the minimal-modification QP:
\begin{equation}
  \usafe = \arg\min_{\mathbf{u}} \tfrac{1}{2}\norm{\mathbf{u} - \udes}^2
  \quad\text{s.t.}\quad
  \mathbf{g}_i^\top \mathbf{u} \geq b_i,\; i=1,2.
  \label{eq:qp}
\end{equation}
This derivation is valid for \emph{any} $T\in\mathcal{T}$; the specific values
of $\mathbf{g}_i$ and $b_i$ depend on the chosen instantiation.
This is a projection onto the intersection of two halfspaces in $\R^2$.
We enumerate the four KKT active-set cases analytically:

\noindent\textbf{Case 1} (neither constraint violated): $\usafe = \udes$.

\noindent\textbf{Case 2} (only constraint $i$ violated): project onto the
single halfspace,
\begin{equation}
  \usafe = \udes + \frac{b_i - \mathbf{g}_i^\top\udes}{\norm{\mathbf{g}_i}^2}\,\mathbf{g}_i,
  \label{eq:single_proj}
\end{equation}
then verify the other constraint.  If satisfied, accept; otherwise fall
through to Case~4.

\noindent\textbf{Case 3}: symmetric to Case~2 for the other constraint.

\noindent\textbf{Case 4} (both constraints active): solve the $2\times2$ Gram
system
\begin{equation}
  \begin{bmatrix} \mathbf{g}_1^\top\mathbf{g}_1 & \mathbf{g}_1^\top\mathbf{g}_2 \\
                  \mathbf{g}_2^\top\mathbf{g}_1 & \mathbf{g}_2^\top\mathbf{g}_2 \end{bmatrix}
  \begin{bmatrix}\lambda_1\\\lambda_2\end{bmatrix}
  =
  \begin{bmatrix}b_1 - \mathbf{g}_1^\top\udes\\b_2 - \mathbf{g}_2^\top\udes\end{bmatrix},
  \label{eq:gram}
\end{equation}
yielding $\usafe = \udes + \lambda_1\mathbf{g}_1 + \lambda_2\mathbf{g}_2$.
The determinant of the Gram matrix is
$\det = \norm{\mathbf{g}_1}^2\norm{\mathbf{g}_2}^2 - (\mathbf{g}_1^\top\mathbf{g}_2)^2
       = \sin^2\theta\,\norm{\mathbf{g}_1}^2\norm{\mathbf{g}_2}^2$,
which vanishes if and only if $\mathbf{g}_1 \parallel \mathbf{g}_2$
(obstacle and frontier gradients aligned).
KKT dual feasibility requires $\lambda_1, \lambda_2 \geq 0$; if one multiplier
is negative, the corresponding constraint is not truly active and we fall back
to the single-constraint projection.

\subsection{Soft-QP Fallback}
\label{sec:soft}

When $\mathbf{g}_1 \parallel \mathbf{g}_2$ and the hard problem is infeasible,
we solve a soft relaxation:
\begin{equation}
  \min_{\mathbf{u},\delta\geq 0}
    \tfrac{1}{2}\norm{\mathbf{u}-\udes}^2 + \tfrac{p}{2}\delta^2
  \quad\text{s.t.}\quad
  \mathbf{g}_i^\top\mathbf{u} \geq b_i - \delta,\; i=1,2.
  \label{eq:soft_qp}
\end{equation}
We solve~\eqref{eq:soft_qp} for $\delta^\star$ via bisection (20 iterations)
and then apply the dual projection of Section~\ref{sec:dual_projection} to the
relaxed constraints.

\subsection{Adaptive Frontier Gain $\gamma_2$}
\label{sec:adaptive}

A fixed $\gamma_2$ imposes the same frontier-avoidance stiffness regardless of
how much information is available nearby.
We instead compute a local uncertainty density:
\begin{equation}
  \rho(\mathbf{x}) =
    \frac{|\{\text{unknown cells within sensor disc}\}|}{|\text{disc area in cells}|} \in [0,1],
  \label{eq:rho}
\end{equation}
and schedule:
\begin{equation}
  \gamma_2 = \gamma_{\min} + (\gamma_{\max} - \gamma_{\min})(1 - \rho).
  \label{eq:gamma2}
\end{equation}
When $\rho$ is high (many unknown cells nearby), $\gamma_2 \to \gamma_{\min}$,
relaxing the frontier constraint and allowing the drone to approach unexplored
space.
When $\rho \approx 0$ (mostly mapped neighbourhood), $\gamma_2 \to \gamma_{\max}$,
tightening the constraint since there is no exploration value in pushing further.
Since $\gamma_2(\mathbf{x}) \geq \gamma_{\min} > 0$ for all $\mathbf{x}$, the CBF
condition $\dot{h}_2 \geq -\gamma_2(\mathbf{x})h_2$ is sufficient for forward
invariance of $\mathcal{C}_2$ by \cite{ames2019cbf}.

\subsection{Speed Ceiling}

After the dual projection, a ball projection enforces a maximum speed:
\begin{equation}
  \usafe \leftarrow
  \begin{cases}
    \usafe & \text{if } \norm{\usafe} \leq v_{\max}, \\
    v_{\max}\,\usafe/\norm{\usafe} & \text{otherwise.}
  \end{cases}
\end{equation}

\subsection{Complete Filter}

Algorithm~\ref{alg:filter} summarises the complete CBF filter.

\begin{algorithm}[t]
\caption{CBF Safety Filter (general $T\in\mathcal{T}$, tanh instantiation)}\label{alg:filter}
\begin{algorithmic}
\REQUIRE $\udes$, $\sdf_{\mathrm{obs}}$, $\sdf_{\mathrm{unk}}$ (or \textsc{None}),
         grid cell $(g_x, g_y)$, resolution $r$
\STATE Compute $\mathbf{g}_1, b_1$ from $\sdf_{\mathrm{obs}}$ via~\eqref{eq:cbf_constraint_general}--\eqref{eq:rhs_concrete}
\IF{$\sdf_{\mathrm{unk}} = \textsc{None}$}
  \STATE $\usafe \leftarrow$ single projection~\eqref{eq:single_proj} with $(\mathbf{g}_1, b_1)$
\ELSE
  \STATE Compute $\mathbf{g}_2, b_2$ from $\sdf_{\mathrm{unk}}$; update $\gamma_2$ via~\eqref{eq:gamma2}
  \STATE Attempt dual projection (Cases 1--4, Section~\ref{sec:dual_projection})
  \IF{residual constraint violated}
    \STATE $\usafe, \delta^\star \leftarrow$ soft-QP bisection~\eqref{eq:soft_qp}
  \ENDIF
\ENDIF
\STATE Apply speed ceiling; \textbf{return} $\usafe$
\end{algorithmic}
\end{algorithm}

\section{System Integration}
\label{sec:system}

\subsection{Hardware Platform}
Experiments are conducted on a custom quadrotor comprising a CUAV Nano 7 flight
controller running PX4~v1.16 and a Raspberry Pi~4B (4~GB) companion computer.
Perception is provided by Intel Realsense D435i Camera and a TFMini laser rangefinder for altitude hold.
The vehicle communicates via Micro-XRCE DDS over ethernet; the companion computer runs
ROS~2~Jazzy and controls the drone in PX4 Offboard mode by streaming
velocity setpoints at \SI{10}{\hertz}.

\subsection{Software Architecture}
The exploration stack comprises three layers:
\begin{enumerate}
  \item \textbf{Global planner}: frontier detection on the RTAB-Map occupancy
        grid, followed by RRT$^*$ path planning on an inflated costmap.
  \item \textbf{Local controller}: artificial potential field (APF) tracking
        the current RRT$^*$ waypoint, with repulsive terms for mapped obstacles.
  \item \textbf{Safety filter}: TanhDualCBF applied to the APF output velocity
        before publication to \texttt{cmd\_vel}.
\end{enumerate}

\subsection{Parameter Settings}
Table~\ref{tab:params} lists the CBF parameters used in all experiments.

\begin{table}[!t]
\caption{CBF Parameter Settings\label{tab:params}}
\centering
\begin{tabular}{lll}
\hline
Parameter & Symbol & Value \\
\hline
Obstacle standoff      & $d_{\mathrm{safe}}$   & \SI{0.35}{\metre} \\
Frontier standoff      & $d_{\mathrm{stop}}$   & \SI{0.35}{\metre} \\
Obstacle tanh sharpness & $a_1$                & 2.0 \\
Frontier tanh sharpness & $a_2$                & 2.0 \\
Obstacle CBF gain      & $\gamma_1$            & 1.5 \\
Frontier CBF gain (max) & $\gamma_{2,\max}$    & 1.0 \\
Frontier CBF gain (min) & $\gamma_{2,\min}$    & 0.2 \\
Maximum speed          & $v_{\max}$            & \SI{0.20}{\metre\per\second} \\
Min.\ unknown cluster  & $N_{\min}$            & 25 cells \\
\hline
\end{tabular}
\end{table}

\section{Experimental Results}
\label{sec:experiments}

\subsection{Simulation Results}

Simulation experiments were conducted to evaluate the performance of the proposed CBF filter against the RRT*-APF  only baseline. While the baseline controller maintains safety through highly conservative, distance-based repulsion, this heuristic approach inherently restricts the drone's operational envelope, causing it to shy away from high curvature areas and unexplored boundaries. Effectively, the baseline controller prioritizes distance from obstacles over mission progress, leading to a conservative avoidance pattern characterized by premature repulsion from frontier interfaces.

By providing formal safety guarantees, the filter allows the drone to aggressively navigate boundary proximal regions that the baseline controller would otherwise avoid with an instantaneous opposing velocity as summarized in Table II: the CBF-enabled drone achieves a final explored area of 81.68 m², increase over the baseline’s 56.35 m². The average adaptive $\gamma_2$ was in the range of 0.733 - 1.0 . The soft-QP fallback was not triggered.The baseline's higher minimum clearance reflects the APF's highly reactive repulsion, which keeps the drone well away from obstacles at the cost of exploration progress; the CBF filter moderates this repulsion, permitting closer frontier-proximal approach while maintaining physical clearance above the drone radius.

Table~\ref{tab:comparison_full} summarizes the quantitative comparison between
the proposed CBF-enabled approach and the RRT*-APF only baseline.

\begin{table*}[!b]
\centering
\caption{Comparative performance metrics between the proposed CBF-enabled approach and the RRT*-APF only baseline.\label{tab:comparison_full}}
\normalsize
\begin{tabular}{l c c}
\toprule
\textbf{Metric} & \textbf{CBF (Proposed)} & \textbf{Baseline (APF Only)} \\
\midrule
\multicolumn{3}{l}{\textit{Mission Performance}} \\
Total exploration time (s) & 120.0 & 120.0 \\
Explored area (m$^2$) & 81.68 & 56.35 \\
Total path length (m) & 15.88 & 13.35 \\
Average speed (m/s) & 0.133 & 0.112 \\
\midrule

\multicolumn{3}{l}{\textit{Safety \& CBF Metrics}} \\
Global min clearance (m) & 0.304 & 0.412 \\
Obstacle violation ticks & 11 & 0 \\
Frontier violation ticks & 66 & 56 \\
CBF intervention rate & 0.0467 & 0.0000 \\
CBF speed clips & 189 & 0 \\
Avg CBF slack & 0.0012 & 0.0000 \\
Avg adaptive $\gamma_2$ & 0.9253 & --- \\
\bottomrule
\end{tabular}
\end{table*}
\subsection{Hardware Results}

\subsubsection{Experimental Setup}

Experiments were conducted in a hall with relatively open space with some obstacles.
The drone was initialised at different positions and was commanded to explore
autonomously for up to 120~seconds. In practice, individual runs were terminated
early due to low battery or proximity-triggered operator intervention; only data
up to the point of termination is reported for each run.
Three hardware runs are reported.

\subsubsection{Representative Hardware Traces}
Fig.~\ref{fig:hardware_traces} presents representative traces from Run~1,
including the barrier values, raw SDF distances, and the CBF-induced velocity
change. Together with Table~\ref{tab:interventions}, these plots summarize the
same hardware-run category from complementary perspectives: time-series
behavior in the figure and aggregate run-level metrics in the table.

\begin{figure*}[!t]
\centering
\subfloat[CBF barrier functions]{\includegraphics[width=0.325\textwidth]{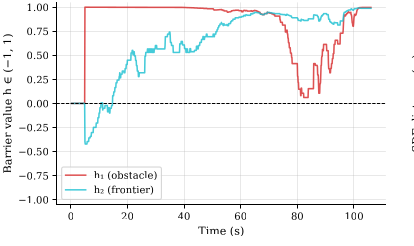}\label{fig:hardware_cbf}}
\hfil
\subfloat[Raw SDF distances]{\includegraphics[width=0.325\textwidth]{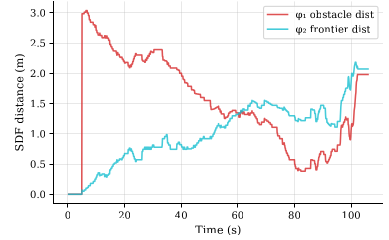}\label{fig:hardware_sdf}}
\hfil
\subfloat[CBF-induced velocity change]{\includegraphics[width=0.325\textwidth]{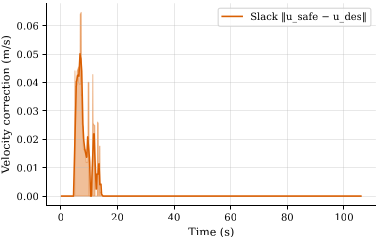}\label{fig:hardware_slack}}
\caption{Representative hardware traces from Run~1 showing the barrier-function evolution,
raw SDF distances, and the change in commanded velocity induced by the CBF
filter.}
\label{fig:hardware_traces}
\end{figure*}

\subsubsection{Intervention Analysis}
Table~\ref{tab:interventions} summarises the CBF-specific metrics across the
three hardware runs. The soft-QP Fallback was not triggered across any runs.

\begin{table*}[!t]
\caption{CBF Metrics Across Hardware Runs\label{tab:interventions}}
\centering
\begin{tabular}{lccccccc}
\hline
Run & Ticks eval. & Obs. active & Front. active & Interv. rate & Infeasible & Speed clips & Min $h_1$ / Min $h_2$ \\
\hline
Run 1 & 1058 & 0 & 71 & 0.0671 & 0 & 56 & 0.0000 / $-0.4219$ \\
Run 2 & 1008 & 0 & 145 & 0.1438 & 0 & 89 & 0.0000 / $-0.9743$ \\
Run 3 & 1198 & 4 & 172 & 0.1469 & 0 & 107 & $-0.1974$ / $-0.7163$ \\
\hline
\end{tabular}
\vspace{1mm}
\begin{tabular}{lcc}
\hline
Run & Avg. CBF slack & Max CBF slack \\
\hline
Run 1 & 0.0018 & 0.0647 \\
Run 2 & 0.0098 & 0.1844 \\
Run 3 & 0.0056 & 0.0926 \\
\hline
\end{tabular}
\end{table*}

\subsubsection{Safety Record}
Across all hardware runs, zero obstacle contacts were recorded. The minimum 
observed clearance of 0.25\,m occurred in Run~3; although the resulting velocity 
correction was small in magnitude, at such proximity even a marginal deflection 
is safety-critical. The barrier $h_1$ briefly dipped below zero in this run, 
however the physical clearance remained above the drone radius of 0.22\,m, 
consistent with the conservative 0.35\,m safety margin encoded in the barrier 
function providing a buffer against such transient violations. 

\section{Discussion}
\label{sec:discussion}

\textbf{Limitations.}
The current implementation operates in 2D (horizontal plane at fixed altitude).
Extension to 3D would require volumetric SDF computation, which is
computationally more demanding.
The frontier barrier relies on the occupancy grid produced by RTAB-Map; SLAM
failures or delayed loop closures can temporarily misrepresent the frontier SDF. Furthermore, the framework currently assumes a static environment; handling dynamic obstacles would necessitate the integration of velocity prediction within the CBF constraints. We also note that the tuning of class-$\mathcal{K}$ functions presents a trade-off between safety and performance, and in highly cluttered scenarios, the underlying QP solver may face infeasibility, requiring more robust slack variable handling.

\section{Conclusion}
\label{sec:conclusion}
We presented a dual-barrier CBF safety filter with a closed form KKT 
solution that simultaneously enforces obstacle standoff and frontier 
containment for autonomous UAV exploration. Hardware validation on a 
resource-constrained PX4 quadrotor confirms zero obstacle contacts 
across all runs, with the analytical filter imposing negligible overhead 
alongside a full SLAM and planning stack.

\vspace{-0.5ex}
\section*{Author Contributions}
Himanshu Paudel conceptualized the theoretical framework, formulated the control barrier function (CBF)-based filter, derived the analytical solutions, and developed the exploration planner. Himanshu Paudel also led simulation efforts, contributed to visual-inertial odometry (VIO), SLAM, and hardware implementation, and wrote the original draft of the manuscript. Basanta Joshi contributed to VIO and SLAM development, as well as simulation, hardware implementation, and experiments. Dhirendra Raj Madai and Alina Bartaula designed the UAV platform, integrated the physical hardware components, and managed electrical wiring and avionics. Biman Rimal and Sanjay Neupane provided project supervision, secured hardware resources, offered high-level structural guidance, and contributed to manuscript review and editing.
\vspace{-1.0ex}



\end{document}